\newcommand{\bbR}{\mathbb{R}}
\newcommand{\bmq}{\bm{q}}
\newcommand{\bmQ}{\bm{Q}}
\newcommand{\bmL}{\bm{L}}
\DeclareMathOperator*{\Var}{Var}
\DeclareMathOperator*{\bbE}{\mathbb{E}}
\newcommand{\KL}{\mathrm{KL}}
\newcommand{\unif}{\mathrm{unif}}
\newcommand{\LS}{\mathrm{LS}}
\theoremstyle{plain}
\newtheorem*{thm*}{Theorem}
\theoremstyle{definition}
\newtheorem{lemma}{Lemma}
\newtheorem{proposition}{Proposition}
\newcommand{\wtildeQ}{\widetilde{\bmQ}}
\newcommand{\wtildeq}{\widetilde{\bmq}}
\title{Likelihood Variance as Text Importance \\ for Resampling Texts to Map Language Models}
\author{
Momose Oyama${}^{1,2}$\quad Ryo Kishino${}^{1}$\quad Hiroaki Yamagiwa${}^{1}$\quad Hidetoshi Shimodaira${}^{1,2}$\\
${}^{1}$Kyoto University\quad
${}^{2}$RIKEN\\
\texttt{oyama.momose@sys.i.kyoto-u.ac.jp, kishino.ryo.32s@st.kyoto-u.ac.jp}\\
\texttt{\{h.yamagiwa, shimo\}@i.kyoto-u.ac.jp}
}
\begin{document}
\maketitle
\begin{abstract}
We address the computational cost of constructing a model map, which embeds diverse language models into a common space for comparison via KL divergence. The map relies on log-likelihoods over a large text set, making the cost proportional to the number of texts. To reduce this cost, we propose a resampling method that selects important texts with weights proportional to the variance of log-likelihoods across models for each text. Our method significantly reduces the number of required texts while preserving the accuracy of KL divergence estimates. Experiments show that it achieves comparable performance to uniform sampling with about half as many texts, and also facilitates efficient incorporation of new models into an existing map. These results enable scalable and efficient construction of language model maps.
\end{abstract}

\section{Introduction}
For the systematic comparison of numerous language models, \citet{modelmap2025} proposed a method to estimate Kullback-Leibler (KL) divergence between language models based on log-likelihood vectors. In this method, language models with different architectures are embedded into a common space, and visualizing this creates a map of language models (Fig.~\ref{fig:modelmap-error-ellipse}). 

The model map is constructed using a text set $D_N = \{x_1, \cdots, x_N\}$. Since $D_N$ is a sample from a broader text population $D^{\dagger}=\{x_1^\dagger,\ldots,x_{N_0}^\dagger\}$, it
introduces sampling error relative to the true relationships between models in the population\footnote{When the entire Pile corpus is converted into 1,024-byte text chunks, the total number of texts is $N_0=$ 5,703,791.}. Additionally, the computational cost, which is proportional to the number of texts $N$, is also an issue.

\begin{figure}[t]
    \centering
    \includegraphics[width=\linewidth]{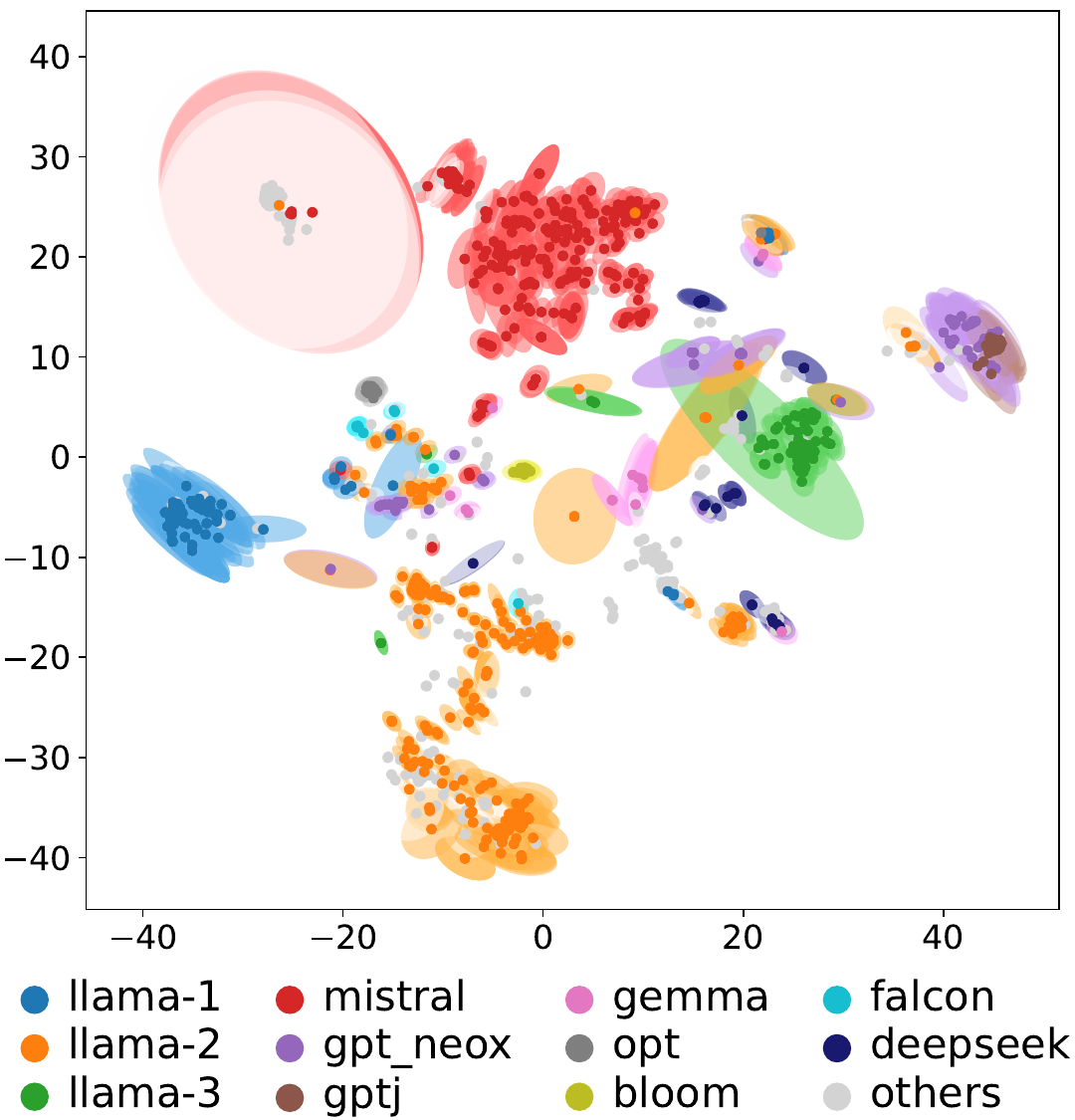}
    \caption{The model map calculated with $D_N$ is visualized using t-SNE. The sampling error for each point was estimated using the bootstrap resampling method and shown as an ellipse. See Appendix~\ref{sec:app-model-map} for details.}
    \label{fig:modelmap-error-ellipse}
\end{figure}

To reduce computational costs for adding new models to the map, we consider resampling $n$ texts from the data $D_N$ with replacement, and then reconstruct the model map using the resulting set $D^{\ast}_d$ of $d$ unique texts ($d\le n$). 
When we discuss the distance between models calculated using log-likelihood vectors and the reliability of the model map, it is crucial to focus on the sampling error with respect to the population.

To estimate the resampling error of the distance between models measured based on $D^{\ast}_d$ relative to the true distances in the population, two types of errors need to be considered. 
The first is the \textbf{sampling error} that the original data $D_N$ has with respect to the population $D^{\dagger}$, and the second is the \textbf{resampling error} incurred when selecting $D^{\ast}_d$ from $D_N$. 

Since uniform resampling from data $D_N$ has limitations, we propose a method to resample important texts with high probability. 
This idea is based on the column sampling method of~\citet{drineas-kannan-2001}, which was originally proposed for approximating matrix products.
Through experiments, we show that our proposed text resampling method achieves comparable estimation error to uniform random sampling with about half the number of texts, and is also effective when adding new models to an existing model map. 

\section{Background}
\subsection{Map of Language Models} \label{sec:model-map}
The map of $K$ language models $p_1, \dots, p_K$ is constructed based on a text set $D_N = \{x_1, \cdots, x_N\}$.
The log-likelihood of model $p_i$ for text $x_s$ is denoted by $\log p_i (x_s)$, and the log-likelihood matrix $\bmL\in \bbR^{K\times N}$ is composed of these elements.
Let $\bmQ\in \bbR^{K\times N}$ be the matrix obtained by applying double centering\footnote{Centering is performed row-wise (per model) and then column-wise (per text).} to matrix $\bmL$.
The $i$-th row vector $\bmq_i \in \bbR^{N}$ of this matrix serves as the coordinate for language model $p_i$. 
\citet{modelmap2025} showed that the KL divergence between models $p_i$ and $p_j$ can be estimated by the following equation:
\begin{align*}
2\,\mathrm{KL}(p_i,p_j) \approx \|\bmq_i - \bmq_j\|^2/N.
\label{eq:estimated-KL}
\end{align*}

\subsection{Length Squared Sampling} \label{sec:length-squared-sampling}
The idea of resampling text based on its importance is derived from column sampling methods, which probabilistically select a small number of columns from a matrix $\bm{A} = (A^{(1)},\ldots,A^{(N)}) \in \bbR^{K \times N}$ to approximate the matrix product $\bm{A}\bm{A}^{\top}$.

In the representative Length Squared (LS) sampling method~\cite{drineas-kannan-2001}, each column $A^{(s)}$ is sampled with a probability proportional to the square of its Euclidean norm, $ \| A^{(s)} \|^{2}$. This is known to minimize the expected Frobenius norm of the approximation error for $\bm{A}\bm{A}^{\top}$.

\section{Resampling Texts for Model Map} \label{sec:N-to-d}

\subsection{Text Resampling Method}
We apply the idea of LS sampling to reduce the number of texts used for the model map, while estimating the model distance $\|\bmq_i - \bmq_j\|^2$ as accurately as possible. 
To determine the probability $\pi_s$ that text $x_s$ is resampled from dataset $D_N = \{x_1, \dots, x_N\}$, we utilize the information in the double centered log-likelihood matrix $\bmQ \in \bbR^{K\times N}$.

\paragraph{LS Sampling.}
Following \citet{drineas-kannan-2001}, we set $\pi_s \propto \|Q^{(s)}\|^2$.
The squared norm $\|Q^{(s)}\|^2$ corresponds to the variance of the log-likelihoods for text $x_s$ across models.
This is optimal for approximating the inner product $\bmq_i^{\top}\bmq_j$, but not necessarily optimal for our goal of approximating the squared Euclidean distance $\| \bmq_i - \bmq_j\|^2$.

\paragraph{KL Sampling.}
Therefore, we aim to directly minimize the estimation error of the squared Euclidean distance and propose KL sampling, where
\[
\pi_s \propto \sqrt{\sum_{i, j=1}^K (q_i{(x_s)}-q_j{(x_s)})^4}.
\]
Here, $q_i{(x_s)}$ represents the $(i,s)$-th component of matrix $\bmQ \in \bbR^{K \times N}$. 
We show in Appendix~\ref{sec:kl-sampling} that this method yields an optimal approximation of the squared Euclidean distance $\| \bmq_i - \bmq_j\|^2$. 

\paragraph{Baseline: Uniform Sampling.}
All texts are resampled with equal probability $\pi_s = 1/N$.

\subsection{Model Map with Resampled Texts} \label{sec:model-map-resampled-texts}
Let $\widetilde{D}_n$ be the dataset obtained by resampling $n$ texts from $D_N$, and the set of $d$ unique texts in $\widetilde{D}_n$ be $D_d^{\ast} = \{x_{u_1}, \dots, x_{u_d}\}$. 
We denote $c(u_t)$ as the number of times each text $x_{u_t}$ was resampled, such that $\sum_{t=1}^d c(u_t) = n$.

We construct the log-likelihood matrix $\bmL_d \in \bbR^{K \times d}$ using the $d$ resampled texts and obtain $\wtildeQ_d$ by double centering.
In row-wise centering, the column $L^{(u_t)}$ corresponding to the resampled text $x_{u_t}$ is weighted by $w_{u_t} = c(u_t)/n\pi_{u_t}$, based on the resampling probability $\pi_s$ and the number of times it was resampled $c(u_t)$.
Let $\wtildeq_i\in\bbR^d$ be the row vector of $\wtildeQ_d$. The distance of model $p_i$ and $p_j$ is calculated by the squared Euclidean distance using weights $\bm w_d=(w_{u_1},\ldots,w_{u_d})$:
\[
\| \wtildeq_i - \wtildeq_j \|_{\bm{w}_d}^2 := \sum_{t=1}^{d} \frac{c(u_t)}{n\pi_{u_t}} \left(\widetilde{q}_i (x_{u_t}) - \widetilde{q}_j (x_{u_t})\right)^2.
\]
Here, $\widetilde{q}_i (x_{u_t})$ is the $(i, t)$-th component of matrix $\wtildeQ_d\in \bbR^{K \times d}$.

\section{Experiments} \label{sec:resampling-experiment}

\subsection{Error of Resampled Estimates} \label{sec:resampling-error}
The dataset $D_N$ is a random sample from the text population $D^\dagger$, and $\widetilde{D}_n$ is resampled dataset from $D_N$.
The squared Euclidean distances computed from $D^\dagger$, $D_N$, and $\widetilde D_n$ are, respectively, $g_{ij}^\dagger = (N/N_0)\| \bmq_i^\dagger - \bmq_j^\dagger\|^2$, $g_{ij}=\|\bmq_i - \bmq_j\|^2$, $\widetilde g_{ij}=\| \wtildeq_i - \wtildeq_j \|_{\bm{w}_d}^2$ 
where the scaling ensures comparability across different dimensionalities~\footnote{The coordinate of $p_i$ in $D^\dagger$ is denoted by $\bmq_i^\dagger\in \bbR^{N_0}$. $g_{ij}^\dagger$ and $\widetilde g_{ij}$ are scaled to match $g_{ij}$.}.

Our objective is to evaluate the error, which is how much $\widetilde{g}_{ij}$ deviates from the unknown true value $g^\dagger_{ij}$ in the population.
Since $N_0\gg N$, $g_{ij}^\dagger$ is difficult to compute, so we estimate the error using $D_N$.
Details on the error estimation method are described in Appendix~\ref{sec:resampling-error-to-population}.

\paragraph{Resampling Error to Population ($\hat \sigma_{\mathrm{Method}, n}$).}
Let $\hat \sigma_{\mathrm{Method}, n}$ ($\mathrm{Method} \in \{\LS, \KL, \unif\}$) be the estimated error relative to the true value $g^\dagger_{ij}$ in the population $D^\dagger$.
This is estimated by considering the following two errors: 
\[
\hat\sigma_{\mathrm{Method}, n} = \sqrt{\tau_{\unif, N}^2 +  \tau_{\mathrm{Method}, n}^2}.
\]
Here, $\tau_{\unif, N}$ is the bootstrap estimate~\citep{efron-tibshirani-bootstrap} of the sampling error that $D_N$ itself has with respect to the population $D^\dagger$.
$\tau_{\mathrm{Method}, n}  = \sqrt{\tau_{\mathrm{Method}, n}^2}$ is the Root Mean Squared Error (RMSE) of resampling ($\mathrm{Method}$) with respect to $D_N$, and it is calculated from the aggregated MSE of the relative error $\widetilde{e}_{ij}={(\widetilde{g}_{ij} - g_{ij})}/{\max(g_{ij}, \varepsilon_0)}$ as $\tau_{n}^2 = \frac{1}{K^2R}\sum_{i,j=1}^{K} \sum_{r=1}^{R} \left(\widetilde{e}_{ij}^{(r)}\right)^2$.
In the experiments, we set $\varepsilon_0={10^{-3}}$, $R=100$, and varied $n$ from $10$ to $10{,}000$.

\paragraph{Baseline: Sampling Error ($\hat \kappa_m$).}
Let $\kappa_m$ be the aggregated relative error of $g_{ij}$ with respect to the true value $g^\dagger_{ij}$ in the population.
In this sampling, all $m$ texts are unique, so the number of unique texts is $d=m$.
The bootstrap estimate~\citep{efron-tibshirani-bootstrap} of $\kappa_m$ is $\hat \kappa_m = \tau_{\unif, m}$.

\begin{figure}[t]
    \centering
    \includegraphics[width=\linewidth]{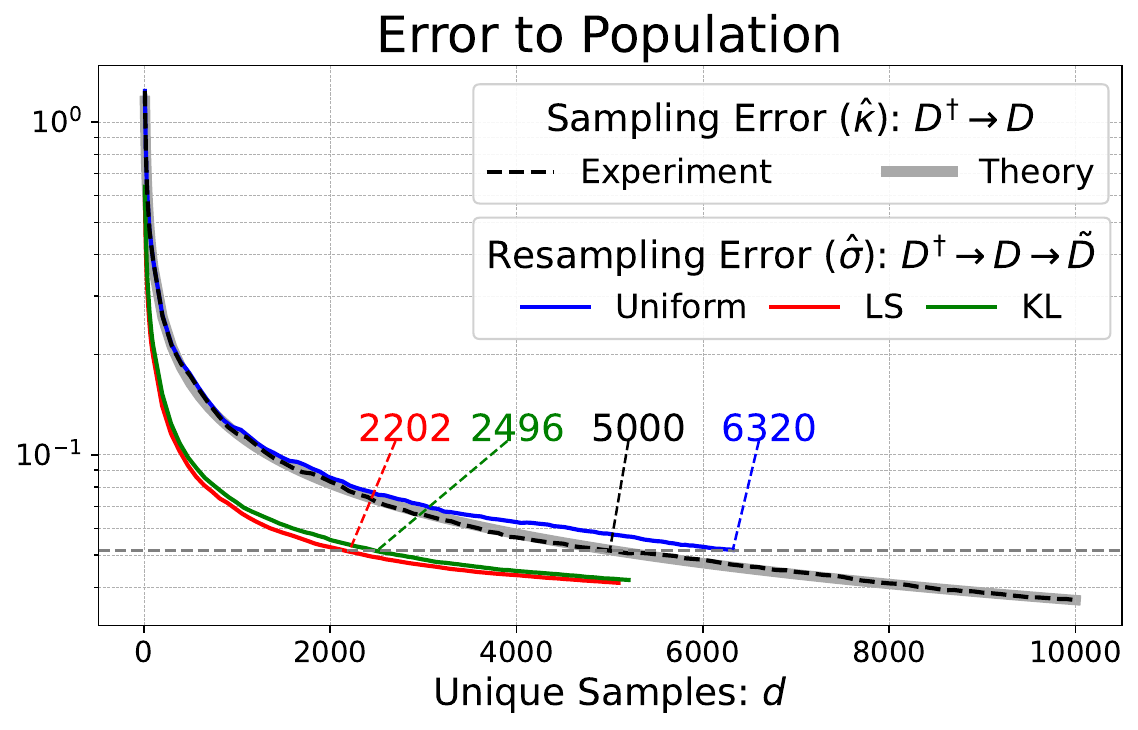}
    \caption{The number of unique texts $d$ for each resampling method and the estimation error against the population. `Sampling Error' corresponds to the estimation error when $d$ texts are randomly sampled without replacement from the population $D^{\dagger}$. Here, the error is normalized by KL divergence for each model pair. For error evaluation without normalization, see Appendix~\ref{sec:absolute-error}.}
    \label{fig:diff-ratio-distance}
\end{figure}

\paragraph{Results.}
Figure~\ref{fig:diff-ratio-distance} shows the relationship between the number of unique texts $d$ and the estimated error with respect to the population for each resampling method.
The dotted line represents the estimated error $\hat{\kappa}_d$ when $d$ texts are directly randomly sampled from the population $D^\dagger$, and the thick gray solid line shows its theoretical value, computed as $\sqrt{N/d} \,\hat{\kappa}_N$.  
The colored solid lines represent the estimated error $\hat{\sigma}_{\text{Method},n}$ when resampling from $D_N$ using each method.
Comparing the blue solid line (Uniform resampling from $D_N$) and the black dotted line (direct sampling from $D^\dagger$), Uniform resampling from $D_N$ results in an error comparable to directly sampling the same number of texts from the population\footnote{In Uniform resampling, weighting according to the number of duplicates slightly degrades performance.}.
In contrast, LS sampling and KL sampling achieve an error comparable to the estimated error $\hat \kappa_d$ of random sampling $d$ texts from the population with a smaller number of unique texts.
As can be seen from Table~\ref{tab:n-and-d}, an error comparable to $\hat \kappa_{5{,}000}$ is achieved with LS sampling using an average of \(d=2{,}202\) texts, and with KL sampling using an average of \(d=2{,}496\) texts, which is about half the number of texts required by uniform sampling from the population to achieve similar error.
These results suggest that LS sampling and KL sampling can achieve comparable estimation accuracy with about half the number of unique texts by selecting important texts.

\begin{table}[t]
\centering
\begin{adjustbox}{width=\linewidth}
\begin{tabular}{lrrrrrrrrrrrr}
\toprule
              $m$ & $500$ & $2{,}500$ & $5{,}000$ & $7{,}500$ \\
\midrule
Uniform        & $582\pm4$ & $2{,}877\pm18$ & N/A & N/A \\
KL             & $196\pm2$ & $1{,}070\pm10$ & $2{,}496\pm22$ & $4{,}780\pm33$ \\
LS             & $195\pm2$ & $895\pm8$ & $2{,}202\pm19$ & $4{,}229\pm30$ \\
\bottomrule
\end{tabular}
 \end{adjustbox}
\caption{Average and standard deviation of the number of unique texts $d$ in resampled dataset $\widetilde{D}_n$, where $n$ is the smallest value satisfying $\hat \sigma_{\mathrm{Method}, n} \le \hat \kappa_m$.}
\label{tab:n-and-d}
\end{table}

\subsection{Stability of Model Map under Resampling}
\label{sec:d-dim-modelmap}
\begin{figure*}[t]
  \centering
  \includegraphics[width=\linewidth]{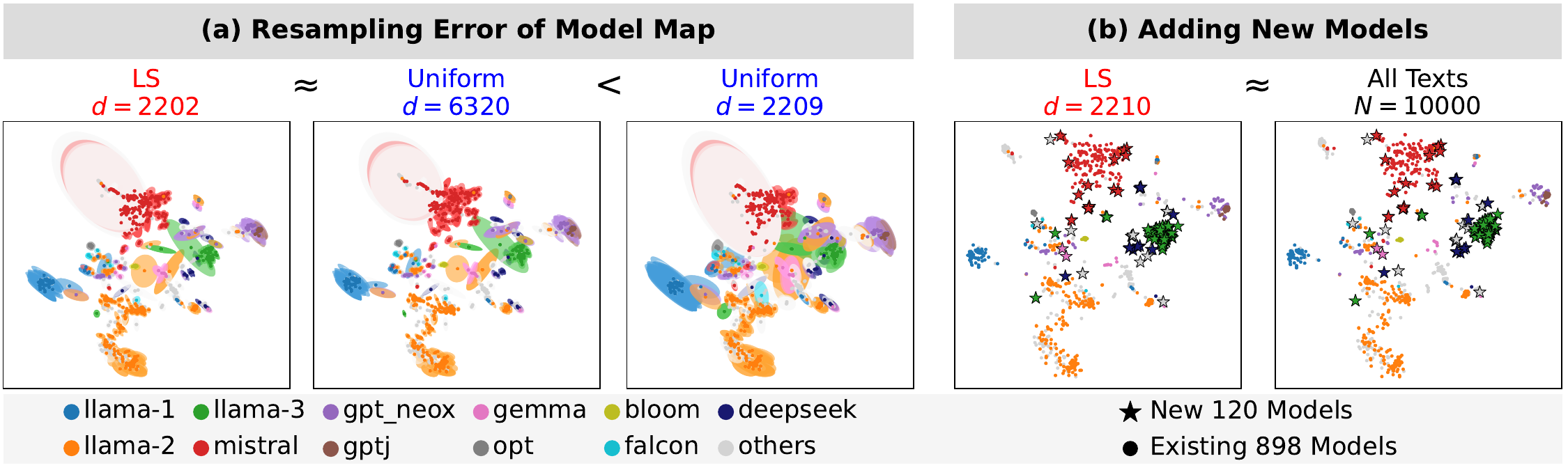}
    \caption{\textbf{(a)}
Model maps based on LS ($n=2900, d = 2202$), uniform ($n=10000, d = 6320$), and uniform ($n=2500, d = 2209$) sampling.  
Each map shows the mean coordinates and their variability as ellipses across 100 trials.
\textbf{(b)}~Model maps with 120 new models added to existing 898 models. 
The left panel uses $d = 2210$ unique texts selected by LS sampling with $n = 2900$.  
The right panel uses all $N = 10{,}000$ texts.
}
  \label{fig:tsne_N-d}
\end{figure*}

For both analyses below, we resample $n$ texts with replacement from $D_N$, resulting in $d$ unique texts.  
Model coordinates are computed from log-likelihoods over the sampled texts and visualized using t-SNE.  
See Appendix~\ref{sec:app-model-map} for details.

\paragraph{(a) Resampling Error of Model Map.}
To evaluate variability due to resampling, we repeated the process 100 times per setting.  
Here, $d$ denotes the average number of unique texts across trials.  
Figure~\ref{fig:tsne_N-d}(a) shows three model maps; ellipses indicate the standard deviation of model positions.  
The first two maps show similar stability, indicating that LS achieves uniform-level robustness with fewer texts.  
When $d$ is matched, LS still yields lower variability than uniform sampling.

\paragraph{(b) Adding New Models.}
We tested whether a small set of texts selected by LS sampling is sufficient for placing new models.  
From 898 models created before April 10, 2024, we sampled $n = 2900$ texts via LS, resulting in $d = 2210$ unique texts\footnote{Model creation dates were obtained using the Hugging Face's API.}.  
The remaining 120 models were treated as new additions.  
This setting uses a single resampling trial, so $d$ is the actual number of unique texts.  
We computed log-likelihoods using only these texts and visualized the updated map.  
As shown in Fig.~\ref{fig:tsne_N-d}(b), the result closely matches the full map based on all $N = 10{,}000$ texts, indicating that reliable placement is achievable with a small subset.

\begin{figure}[t]
    \centering
    \includegraphics[width=\linewidth]{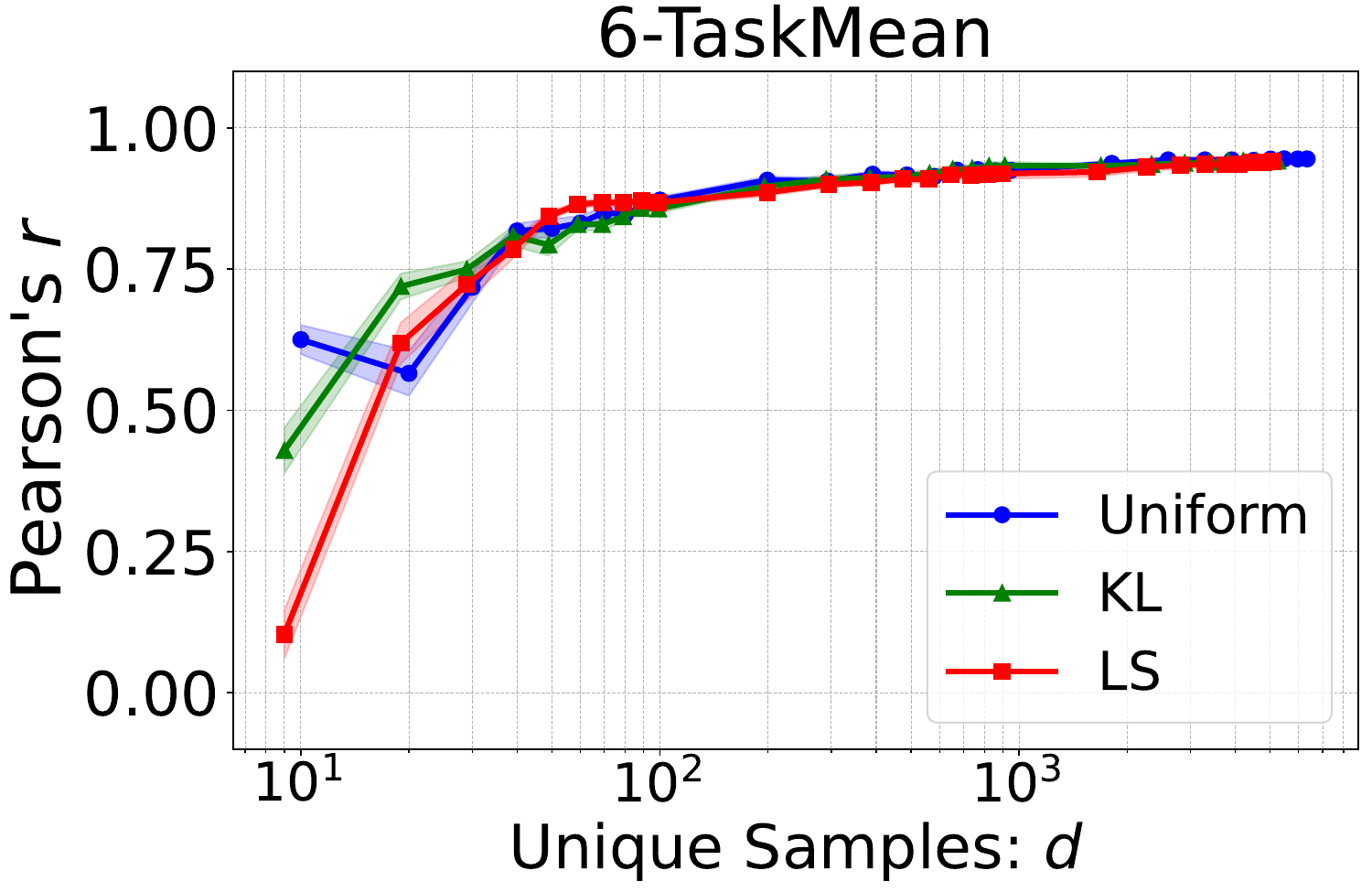}
    \caption{Pearson's correlation $r$ between predicted and actual scores on the average of six downstream tasks, shown as a function of the number of unique texts $d$.}
    \label{fig:pred_6taskmean}
\end{figure}

\subsection{Prediction of Downstream Performance} \label{sec:predict-performance-maintext}
Following \citet{modelmap2025}, we used model coordinates based on $d$ unique texts to predict the average performance on six downstream tasks (Fig.~\ref{fig:pred_6taskmean}).  
Although the resampling methods differ in KL estimation, prediction performance is similar across methods, suggesting that the resulting coordinates span comparable subspaces.  
At $d = 1000$, all methods achieve $r \approx 0.92$--$0.93$, indicating that a relatively small number of texts suffices for accurate prediction.  
See Appendix~\ref{sec:predict-performance} for details.

\section{Conclusion}
We discussed text selection methods to reduce the computational cost of calculating model maps.
Experimental results showed that, compared to uniform random sampling, our proposed methods can achieve comparable estimation error with approximately half the number of texts.
Furthermore, we confirmed that a small set of texts selected by this resampling is also effective when adding new models to an existing model map.
These results enable more efficient comparative analysis of large-scale language models. 

\clearpage
\section*{Limitations}

\begin{itemize}
    \item The aspect of data sampling remains unexplored. Future investigation is required to understand the extent to which the discussions in this study apply to datasets different from the $D_N$ used in this study.
    \item Regarding the experiments for adding new models, we did not include a detailed discussion on the number or types of the new models added.
   \item While LS and KL sampling both outperformed uniform sampling in KL divergence estimation, the difference disappeared in downstream performance prediction, where all methods performed similarly.  
This suggests that improvements in distance estimation do not necessarily lead to gains in downstream utility, and the relationship between the two warrants further investigation.
\end{itemize}

\section*{Acknowledgments}
This study was partially supported by JSPS KAKENHI 22H05106, 23H03355, JST CREST JPMJCR21N3, JST BOOST JPMJBS2407.

\bibliography{bib/anthology,bib/custom,bib/modelcard}

\appendix

\section{Author Contributions}
M.O. conceived the overall research idea of reducing computational cost via resampling, developed the LS sampling algorithm using matrix approximation techniques, and implemented all resampling methods and experiments related to model distance estimation and map construction.  
R.K. proposed the KL sampling method and proved its optimality.  
H.S. developed the statistical framework for analyzing sampling and resampling errors, and provided theoretical justification.  
H.Y. conducted the experiments on downstream task performance prediction.  
All authors contributed to writing the manuscript. M.O. coordinated the writing process and took the lead on the overall composition, while each author wrote the parts corresponding to their contributions.
The project was supervised by M.O. and H.S.

\section{Details of Model Map Construction} \label{sec:app-model-map}
\subsection{Log-Likelihood Data of Language Models} \label{sec:log-likelihood-data}
In our experiment, we used log-likelihood data~\cite{modelmap2025}\footnote{\url{https://github.com/shimo-lab/modelmap}} for \(K=1{,}018\) language models, calculated on \(N=10{,}000\) texts extracted from the Pile~\cite{arxiv:2101.00027}.

In the experiment described in Section~\ref{sec:d-dim-modelmap} for adding new models to the model map, we clipped the log-likelihood matrix $\bmL$ of the existing 898 models at the lower 2nd percentile value $-1495.9$, following \citet{modelmap2025}. The log-likelihood values of the newly added 120 models were also clipped using $-1495.9$ as the threshold.

\subsection{Visualization of Model Map} \label{sec:tsne-alignment}

This section details the procedure for generating the t-SNE~\cite{vanderMaaten-2008-tsne} visualizations and standard deviational ellipses presented in Fig.~\ref{fig:modelmap-error-ellipse} and Fig.~\ref{fig:tsne_N-d}.

\paragraph{Coordinate Computation.}
Resampling method and the number of texts to resample, $n$, are chosen.
Based on the unique resampled texts $D^{\ast}_d$, the model coordinates $\wtildeQ_d \in \bbR^{K \times d}$ are computed. These $d$ dimensional coordinates $\wtildeQ_d$ are subsequently reduced to two dimensions using the t-SNE algorithm.
This entire process, from resampling to t-SNE, is repeated $R=100$ times to assess variability. For consistency across these $R$ trials, the `random\_state` parameter of the t-SNE algorithm is fixed to 42. The initial coordinates for t-SNE in each trial are determined by applying PCA to the $\wtildeQ_d$ matrix calculated for that specific trial.

\paragraph{Coordinate Alignment.}
Let $\bm{X}_r \in \bbR^{K \times 2}$ denote the matrix of t-SNE coordinates obtained from the $r$-th trial (for $r=1,\dots, R$), and $\bm{X}_{\mathrm{ref}}$ as the t-SNE coordinates of $\bmQ$.
Since t-SNE results can vary due to inherent translational and rotational ambiguities, an alignment procedure is necessary to compare the $R$ sets of coordinates.
First, each set of coordinates $\bm{X}_r$ is centered by subtracting its mean: $\bm{Y}_r := \bm{X}_r-\bar{\bm{X}}_r$.
Next, the Orthogonal Procrustes~\cite{Schoenemann1966AGS} analysis is applied to align these centered coordinate sets.
For each subsequent set $\bm{Y}_r$ (where $r=1,\dots, R$), an orthogonal transformation matrix $\bm{U}_{r}$ is found 
that best aligns $\bm{Y}_r$ with $\bm{Y}_{\mathrm{ref}} = \bm{X}_{\mathrm{ref}} - \bar{\bm{X}}_{\mathrm{ref}}$. 
The aligned coordinates are then given by $\bm{Z}_r:=\bm{Y}_r\bm{U}_{r}$.

\paragraph{Centrography.}
After aligning all $R=100$ sets of t-SNE coordinates $\bm{Z}_r$, the mean coordinate $\bar{\bm{z}}_i \in \bbR^{2}$ and the covariance matrix $\mathrm{Cov}({\bm{z}}_i) \in \bbR^{2\times 2}$ are calculated for each model $i$ across the $R$ trials. The standard deviational ellipses~\cite{Yuill-1971-std-ellipse} shown in the figures are derived from these covariance matrices, with their height, width, and angle determined by the eigenvalues and eigenvectors of $\mathrm{Cov}({\bm{z}}_i)$.

\section{KL Sampling} \label{sec:kl-sampling}
\paragraph{Notation.}
We resample a text $x_s$ from the dataset $D_N = \{x_1, \ldots, x_N\}$ with probability $\pi_s$. Let the $n$ resampled texts be denoted by $\{x_{u_1}, \ldots, x_{u_n}\}$. Define $\widetilde{\bm Q} = (Q^{(u_1)}, \ldots, Q^{(u_n)}) \in \mathbb{R}^{K \times n}$, and $\widetilde{\bm q}_i \in \mathbb{R}^n$ as the $i$-th row vector of $\widetilde{\bm Q}$. Denoting the $(i, t)$-th element of $\widetilde{\bm Q}$ as $\widetilde q_i(x_{u_t})$, we see that this value is equal to $q_i(x_{u_t})$. Note that, unlike the notation used in Section~\ref{sec:model-map-resampled-texts}, $\widetilde{\bm q}$ allows for duplication of the resampled texts and its columns are resampled from the double-centered matrix $\bm Q$.

Let $\bm w_n = (1/n\pi_{u_1}, \ldots, 1/n\pi_{u_n})^\top\in\mathbb{R}^n$ be the weights on the resampled texts, and let $\bm W = \mathrm{diag}(\bm w_n)$ be the corresponding diagonal matrix. Define $g_{ij} = \|\bm q_i - \bm q_j\|^2$ and $\widetilde g_{ij} = \|\widetilde{\bm q}_i - \widetilde{\bm q}_j\|_{\bm w_n}$, where the weighted norm is taken with respect to $\bm w_n$.

\paragraph{LS Sampling.}
According to \citet{drineas-kannan-2001}, the expected Frobenius norm of the approximation error $\bbE\left[\| \widetilde{\bm Q} \bm W \widetilde{\bm Q}^\top - \bm Q \bm Q^\top \|_F^2\right]$ is minimized when the resampling probabilities satisfy $\pi_s \propto \|Q^{(s)}\|^2$.

\paragraph{KL sampling (Proposed).}
Instead of approximating the inner products in $\bm Q$, we aim to approximate the sum of the pairwise distances. We prove that the resampling probabilities $\pi_s$ that minimize $\bbE\left[\sum_{i,j=1}^K (\widetilde g_{ij} - g_{ij})^2\right]$ are given by
\[
\pi_s \propto \sqrt{\sum_{i,j=1}^K \left(q_i(x_s) - q_j(x_s)\right)^4}.
\]

\begin{lemma}\label{lem:unbiased}
For any $i, j \in \{1, \ldots, K\}$, it holds that
    $$\mathbb{E}\left[\widetilde{g}_{ij}\right] = g_{ij}.$$
\end{lemma}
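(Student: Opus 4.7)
The plan is to prove unbiasedness by a direct expansion of the weighted squared distance followed by a single application of linearity of expectation. Writing out the definition, we have
\[
\widetilde g_{ij} = \sum_{t=1}^{n} \frac{1}{n\pi_{u_t}} \bigl(q_i(x_{u_t}) - q_j(x_{u_t})\bigr)^2,
\]
where each index $u_t \in \{1,\ldots,N\}$ is drawn i.i.d.\ from the categorical distribution with probabilities $(\pi_1,\ldots,\pi_N)$. So the first step is to note that every summand in $\widetilde g_{ij}$ is identically distributed, and computing $\mathbb{E}[\widetilde g_{ij}]$ reduces to computing the expectation of a single summand and multiplying by $n$.

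Next, I would evaluate that single expectation by summing over the $N$ possible outcomes of $u_t$. For each candidate index $s$, the contribution is $\pi_s \cdot \frac{1}{n\pi_s}(q_i(x_s) - q_j(x_s))^2$, and the crucial observation is that the resampling probability $\pi_s$ cancels exactly against the $1/\pi_s$ weight built into the importance-weighted norm. This is precisely the design of the weights $w_{u_t} = 1/(n\pi_{u_t})$: they are the Horvitz–Thompson-style correction making each summand unbiased for $g_{ij}/n$. Summing over $s$ yields
\[
\mathbb{E}\Bigl[\tfrac{1}{n\pi_{u_t}}(q_i(x_{u_t}) - q_j(x_{u_t}))^2\Bigr] = \frac{1}{n}\sum_{s=1}^{N}\bigl(q_i(x_s) - q_j(x_s)\bigr)^2 = \frac{g_{ij}}{n}.
\]
Multiplying by the $n$ identically distributed summands gives $\mathbb{E}[\widetilde g_{ij}] = g_{ij}$, completing the proof.

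There is really no substantive obstacle here; the argument is a one-line importance-sampling identity. The only subtlety worth flagging is bookkeeping: one must use the duplicate-allowed formulation from the appendix (where $\widetilde{\bm Q}$ has $n$ columns, possibly with repeats, and weights $1/(n\pi_{u_t})$ per draw), rather than the collapsed form in Section~\ref{sec:model-map-resampled-texts} that groups duplicates via $c(u_t)$. The two formulations are equivalent in distribution, but the i.i.d.\ representation makes the linearity-of-expectation step transparent, whereas working directly with $c(u_t)$ would require reasoning about a multinomial count and is unnecessary for an unbiasedness claim.
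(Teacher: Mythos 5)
Your proof is correct and follows essentially the same route as the paper's: expand the weighted norm, apply linearity of expectation, and observe that the $1/\pi_{u_t}$ weight cancels against the sampling probability $\pi_s$ when the expectation is written as a sum over the $N$ possible indices. The paper carries all $n$ terms through the linearity step rather than isolating one summand and multiplying by $n$, but this is only a cosmetic difference, and your remark about preferring the duplicate-allowed (i.i.d.) formulation over the collapsed $c(u_t)$ form matches the paper's own notational choice in the appendix.
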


\begin{proof}
    \begin{align*}
        & \bbE[\widetilde g_{ij}] \\
        &= \bbE[\|\widetilde{\bm q}_i - \widetilde{\bm q}_j\|^2_{\bm w_n}]\\
        & = \bbE\left[\sum_{t=1}^n \dfrac{1}{n\pi_{u_t}}(\widetilde q_i(x_{u_t}) - \widetilde q_j(x_{u_t}))^2 \right]\\
        &=\dfrac{1}{n}\sum_{t=1}^n \mathbb{E}\left[\dfrac{1}{\pi_{u_t}}( q_i(x_{u_t}) -  q_j(x_{u_t}))^2\right]\\
        &= \dfrac{1}{n}\sum_{t=1}^n\sum_{s=1}^N \dfrac{1}{\pi_s} (q_i(x_s)-q_j(x_s))^2 \pi_s\\
        &=\|\bmq_i - \bmq_j\|^2\\
        &= g_{ij}.
    \end{align*}
\end{proof}

\begin{lemma}\label{lem:variance}
The variance of the weighted distance after resampling is given by
    \begin{align*}
    & \Var\left(\widetilde{g}_{ij}\right) \\
    & = \dfrac{1}{n}\sum_{s=1}^N\dfrac{1}{\pi_s}(q_i(x_s) - q_j(x_s))^4 - \dfrac{1}{n}\|\bmq_{i} - \bmq_{j}\|^4.
    \end{align*}
\end{lemma}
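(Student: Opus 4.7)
The plan is to exploit the fact that $\widetilde{g}_{ij}$ is, by construction, a sum of $n$ independent and identically distributed random variables, one per resampling draw. Writing $Y_t := \frac{1}{n\pi_{u_t}}\bigl(q_i(x_{u_t}) - q_j(x_{u_t})\bigr)^2$, the weighted distance decomposes as $\widetilde{g}_{ij} = \sum_{t=1}^n Y_t$, where the $u_t$ are drawn i.i.d.\ from $\{1,\ldots,N\}$ with probabilities $\pi_s$. Independence then gives $\Var(\widetilde{g}_{ij}) = n\,\Var(Y_1)$, reducing the problem to the variance of a single weighted term.

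The computation of $\Var(Y_1) = \bbE[Y_1^2] - \bbE[Y_1]^2$ is then a direct application of the definition of expectation under the sampling distribution. For the second moment, the $1/\pi_s^2$ weight from $Y_1^2$ is reduced to $1/\pi_s$ after integrating against the probability mass $\pi_s$, yielding
\[
\bbE[Y_1^2] = \frac{1}{n^2}\sum_{s=1}^N \frac{1}{\pi_s}\bigl(q_i(x_s) - q_j(x_s)\bigr)^4.
\]
For the first moment, I would invoke Lemma~\ref{lem:unbiased}, which has already established $\bbE[\widetilde{g}_{ij}] = g_{ij}$; combining with $\bbE[\widetilde{g}_{ij}] = n\,\bbE[Y_1]$ gives $\bbE[Y_1] = g_{ij}/n$ and hence $\bbE[Y_1]^2 = \|\bm{q}_i - \bm{q}_j\|^4 / n^2$. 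Substituting and multiplying by $n$ yields the claimed formula.

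I do not anticipate a genuine obstacle here; the argument is a textbook i.i.d.\ variance calculation. The only point deserving care is the bookkeeping of the $1/n$ factors: the definition of $\widetilde{g}_{ij}$ places a $1/n$ inside each summand (so $\Var$ contributes $1/n^2$ per term), while summing $n$ independent terms restores a single factor of $1/n$ in the final expression. A second minor point is to confirm that the i.i.d.\ structure is legitimate, i.e., that the columns of $\widetilde{\bm Q}$ in the present appendix allow duplicates (as flagged in the notation paragraph), which is precisely what makes the draws independent and the variance-of-a-sum identity applicable.
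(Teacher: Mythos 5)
Your proposal is correct and follows essentially the same route as the paper: both use the independence of the resampled indices $u_t$ to write $\Var(\widetilde{g}_{ij})$ as a sum of per-draw variances, and both compute the first and second moments of a single weighted term by summing against $\pi_s$. The only cosmetic difference is that you explicitly invoke Lemma~\ref{lem:unbiased} for the first moment, whereas the paper folds that calculation directly into the variance-of-one-term step.
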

\begin{proof}

Noting that the resampling is performed independently, we obtain
    \begin{align*}
        & \Var(\widetilde{g}_{ij}) \\
        &= \Var\left(\sum_{t=1}^n \dfrac{1}{n\pi_{u_t}}(\widetilde q_i(x_{u_t}) - \widetilde q_j(x_{u_t})^2\right)\\
        &= \sum_{t=1}^n \Var\left(\dfrac{1}{n\pi_{u_t}}(\widetilde q_i(x_{u_t}) - \widetilde q_j(x_{u_t})^2\right)\\
        &=\sum_{t=1}^n \left(\dfrac{1}{n^2}\sum_{s=1}^N\dfrac{1}{\pi_s}(q_i(x_s) - q_j(x_s))^4 \right.
        \\ &\qquad -\left. \dfrac{1}{n^2}\|\bmq_{i} - \bmq_{j}\|^4\right)\\
        &=\dfrac{1}{n}\sum_{s=1}^N\dfrac{1}{\pi_s}(q_i(x_s) - q_j(x_s))^4 - \dfrac{1}{n}\|\bmq_{i} - \bmq_{j}\|^4.
    \end{align*}
\end{proof}

\begin{proposition}
    The resampling probabilites $\pi_s$ that minimize $\bbE\left[\sum_{i,j = 1}^K (\widetilde{g}_{ij} - g_{ij})^2\right]$ are given by
    \[\pi_s\propto \sqrt{\sum_{i,j=1}^K \left(q_i(x_s) - q_j(x_s)\right)^4}.\]
\end{proposition}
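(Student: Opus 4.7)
The plan is to combine Lemmas~\ref{lem:unbiased} and~\ref{lem:variance} to turn the objective into a simple scalar optimization over the probability vector $(\pi_1,\ldots,\pi_N)$, and then apply Cauchy--Schwarz (or equivalently a Lagrange multiplier) under the constraint $\sum_s \pi_s = 1$.

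First, by Lemma~\ref{lem:unbiased} the estimator $\widetilde{g}_{ij}$ is unbiased, so for every pair $(i,j)$ we have $\mathbb{E}[(\widetilde g_{ij} - g_{ij})^2] = \Var(\widetilde g_{ij})$. Substituting the expression from Lemma~\ref{lem:variance} and swapping the order of summation over $(i,j)$ and over $s$, the objective becomes
\[
\mathbb{E}\!\left[\sum_{i,j=1}^K (\widetilde g_{ij} - g_{ij})^2\right]
= \frac{1}{n}\sum_{s=1}^N \frac{a_s}{\pi_s} \;-\; \frac{1}{n}\sum_{i,j=1}^K \|\bmq_i - \bmq_j\|^4,
\]
where $a_s := \sum_{i,j=1}^K (q_i(x_s)-q_j(x_s))^4$. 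The second term is independent of $\pi_s$, so minimizing the objective reduces to minimizing $\sum_{s=1}^N a_s/\pi_s$ subject to $\pi_s \geq 0$ and $\sum_s \pi_s = 1$.

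For the optimization step, I would apply the Cauchy--Schwarz inequality in the form
\[
\left(\sum_{s=1}^N \sqrt{a_s}\right)^2 = \left(\sum_{s=1}^N \sqrt{\pi_s}\cdot\sqrt{\tfrac{a_s}{\pi_s}}\right)^2 \leq \left(\sum_{s=1}^N \pi_s\right)\left(\sum_{s=1}^N \frac{a_s}{\pi_s}\right) = \sum_{s=1}^N \frac{a_s}{\pi_s},
\]
with equality iff $\pi_s \propto \sqrt{a_s}$. Normalizing gives the claimed optimum $\pi_s \propto \sqrt{\sum_{i,j=1}^K (q_i(x_s)-q_j(x_s))^4}$. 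A Lagrange multiplier argument would reach the same conclusion by differentiating $\sum_s a_s/\pi_s - \lambda(\sum_s \pi_s - 1)$ and solving $-a_s/\pi_s^2 = \lambda$.

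I do not anticipate a significant obstacle: the heavy lifting has already been done in the two preceding lemmas, so the proposition is essentially a calculus-of-variations exercise. The only subtlety worth mentioning explicitly is that the interchange of summations in the first display is justified because the sums are finite, and that Cauchy--Schwarz applies coordinatewise since $a_s \geq 0$ and we may restrict to $\pi_s > 0$ whenever $a_s > 0$ (any $s$ with $a_s = 0$ can be assigned $\pi_s = 0$ without affecting the objective).
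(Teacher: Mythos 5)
Your proposal is correct and follows essentially the same route as the paper: use Lemma~\ref{lem:unbiased} to replace the MSE with a sum of variances, substitute Lemma~\ref{lem:variance}, drop the $\pi$-independent term, and minimize $\sum_s a_s/\pi_s$ subject to $\sum_s \pi_s = 1$. The one small difference is that the paper closes with a Lagrange-multiplier stationarity argument, whereas you invoke Cauchy--Schwarz, which is a slight improvement since it directly certifies the global minimum (and cleanly handles the boundary case $a_s=0$) rather than only identifying a critical point.
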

\begin{proof}
By Lemma~\ref{lem:unbiased}, we have
\[
   \bbE\left[\sum_{i,j=1}^K (\widetilde{g}_{ij} - g_{ij})^2\right] = \sum_{i,j = 1}^K \Var(\widetilde{g}_{ij}).
\]
Therefore, by Lemma~\ref{lem:variance}, we aim to minimize
\[
f(\pi_1,\ldots, \pi_N) = \sum_{i,j=1}^K \sum_{s=1}^N \frac{1}{\pi_s} \left(q_i(x_s) - q_j(x_s)\right)^4
\]
subject to the constraint $\sum_{s=1}^N \pi_s = 1$.
Let
\begin{align*}
&g(\pi_1, \ldots, \pi_N, \lambda) \\
&= f(\pi_1, \ldots, \pi_N) + \lambda \left(\sum_{s=1}^N \pi_s - 1\right).
\end{align*}
By setting $\dfrac{\partial g}{\partial \pi_s} = 0$, we obtain
\[
\lambda \pi_s^2 = \sum_{i,j=1}^K \left(q_i(x_s) - q_j(x_s)\right)^4.
\]
Using the condition $\sum_{s=1}^N \pi_s = 1$, it follows that
\[
\sqrt{\lambda} = \sum_{s=1}^N \sqrt{\sum_{i,j=1}^K \left(q_i(x_s) - q_j(x_s)\right)^4}.
\]
Therefore, we have
\[
\pi_s = \frac{\sqrt{\sum_{i,j=1}^K \left(q_i(x_s) - q_j(x_s)\right)^4}}{\sum_{s'=1}^N \sqrt{\sum_{i,j=1}^K \left(q_i(x_{s'}) - q_j(x_{s'})\right)^4}}.
\]
\end{proof}

\section{Error Evaluation without Normalization}
\label{sec:absolute-error}
\begin{figure}
    \centering
    \includegraphics[width=\linewidth]{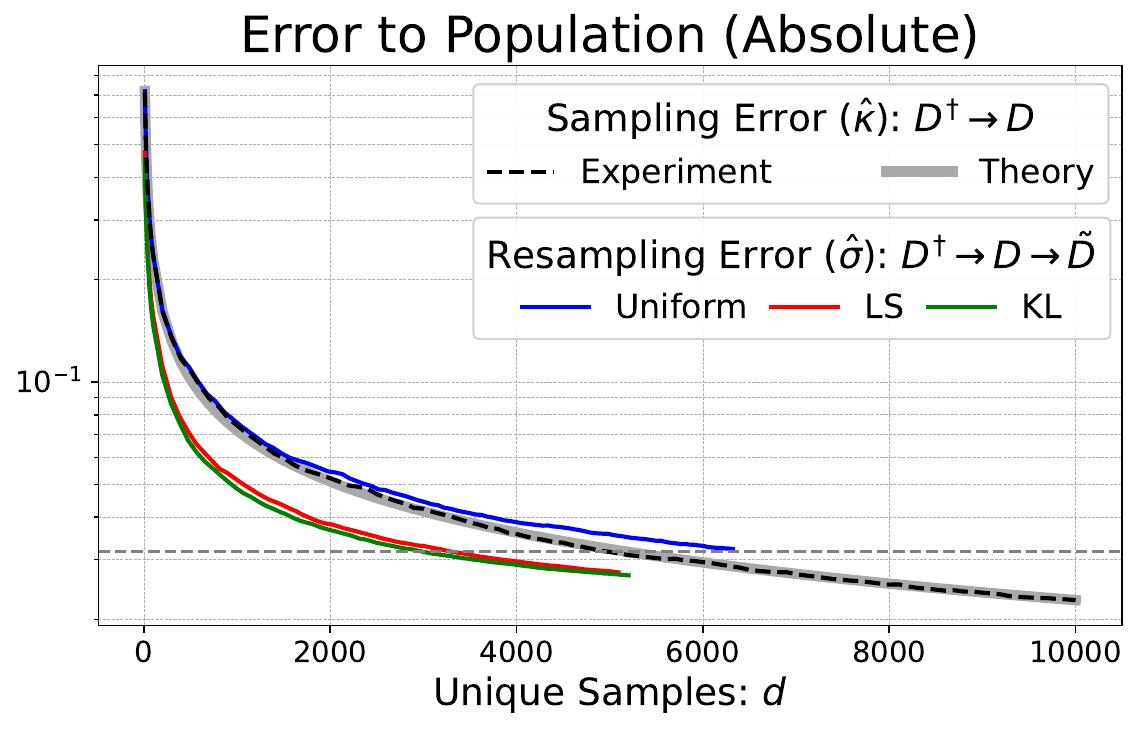}
    \caption{Result of error evaluation without normalization for each model pair. Other settings are the same as Fig.~\ref{fig:diff-ratio-distance}.}
    \label{fig:abs-error-pop}
\end{figure}

In Section~\ref{sec:resampling-error}, the error was normalized by the magnitude of KL divergence for each model pair to evaluate the relative error with respect to KL divergence.
In this section, we evaluate the error without normalization for each model pair. 
The errors $\widetilde e_{ij}$ and $e_{ij}^\dagger$ are redefined by focusing on the sum of squared differences.
\[
\widetilde e_{ij} = \frac{\widetilde{g}_{ij} - g_{ij}}{C},\quad
e_{ij}^\dagger = \frac{g_{ij} - g_{ij}^\dagger}{C^\dagger},
\]
where
\[
C = \frac{1}{K^2}\sum_{i,j=1}^K g_{ij},\quad
C^\dagger = \frac{1}{K^2}\sum_{i,j=1}^K g_{ij}^\dagger.
\]
In the KL sampling method derived in Appendix~\ref{sec:kl-sampling}, optimization is performed to reduce $\bbE[\sum_{i,j=1}^K (\widetilde{g}_{ij} - g_{ij})^2 \mid D_N]$, which corresponds to minimizing the sum of absolute squared errors.

The results of evaluating the error are shown in Fig.~\ref{fig:abs-error-pop}.
KL sampling indeed has a slightly smaller error than LS sampling under this evaluation.
On the other hand, in Fig.~\ref{fig:diff-ratio-distance}, LS sampling can have a slightly smaller error than KL sampling.
In both plots, the performance difference between LS sampling and KL sampling is very small. 
Therefore, this paper primarily explains the simpler LS sampling method due to its comparable performance.

\section{Estimating the Population Error of Resampled Distances} \label{sec:resampling-error-to-population}
This section provides a detailed discussion of the error evaluation method introduced in Section~\ref{sec:resampling-error}.

\paragraph{Sampling Error.}
We consider the case where a dataset $D_n = (x_1, \ldots, x_n)$ of size $n$ is sampled from a large population of texts $D^\dagger = (x^\dagger_1, \ldots, x^\dagger_{N_0})$. We assume that $N_0 \gg n$ and $n$ is sufficiently large. Let $\bmL^\dagger \in \mathbb{R}^{K \times N_0}$ and $\bmL \in \mathbb{R}^{K \times n}$ denote the log-likelihood matrices evaluated on $D^\dagger$ and $D_n$, respectively. Their doubly centered versions are denoted by $\bmQ^\dagger = [\bmQ^{\dagger(1)}, \ldots, \bmQ^{\dagger(N_0)}]$ and $\bmQ = [\bmQ^{(1)}, \ldots, \bmQ^{(n)}]$, with row vectors $\bmq_i^\dagger \in \mathbb{R}^{N_0}$ and $\bmq_i \in \mathbb{R}^n$ representing the coordinates of model $p_i$ in each case.
When $n$ is sufficiently large, the average vector used in centering $\bmQ$ can be well approximated by that of $\bmQ^\dagger$, and thus each column of $\bmQ$ may be regarded as a random sample (without replacement) from the columns of $\bmQ^\dagger$.

We focus on the squared Euclidean distance between two models $p_i$ and $p_j$. 
To ensure comparability with the case where the data size is $n = N$, 
we introduce a scaling factor to normalize the estimate and define:
\begin{align*}
g_{ij} &= \frac{N}{n} \|\bmq_i - \bmq_j \|^2 \\
       &= \frac{N}{n} \sum_{s=1}^n \left(q_i(x_s) - q_j(x_s)\right)^2, \\
g_{ij}^\dagger &= \frac{N}{N_0} \|\bmq_i^\dagger - \bmq_j^\dagger \|^2 \\
       &= \frac{N}{N_0} \sum_{s=1}^{N_0} \left(q_i^\dagger(x^\dagger_s) - q_j^\dagger(x^\dagger_s)\right)^2.
\end{align*}
We define the sampling error as
\[
\varepsilon_{ij} = g_{ij} - g_{ij}^\dagger.
\]
Since the terms $\{(q_i(x_s) - q_j(x_s))^2\}_{s=1}^n$ can be regarded as random samples from $\{(q_i^\dagger(x^\dagger_s) - q_j^\dagger(x^\dagger_s))^2\}_{s=1}^{N_0}$, 
$g_{ij}$ is an unbiased estimator of $g_{ij}^\dagger$, and thus the sampling error satisfies $\mathbb{E}[\varepsilon_{ij}] = 0$.

The mean squared error (MSE) of this sampling error is then given by
\[
\kappa_{ij,n}^2 = \mathbb{E}[\varepsilon_{ij}^2].
\]
Following the formulation in Section~\ref{sec:resampling-error}, we define the aggregated sampling MSE as
\[
\kappa_n^2 = \frac{1}{K^2} \sum_{i,j=1}^K \frac{\kappa_{ij,n}^2}{\max(g_{ij}^\dagger, \varepsilon_0)^2},
\]
where $\varepsilon_0 > 0$ is a small constant to avoid division by zero.

\paragraph{Bootstrap Estimate of Sampling Error.}
We consider a bootstrap procedure that randomly samples $n$ texts uniformly with replacement from the dataset $D_n$ (we note that $D_n$ may later be replaced by $D_N$). That is, each text in $D_n$ is selected with equal probability $\pi_s = 1/n$.
Let $\widetilde D_n = (\widetilde x_1, \ldots, \widetilde x_n)$ denote the resampled dataset. The log-likelihood matrix and its doubly centered version for $\widetilde D_n$ are denoted by $\widetilde{\bmL} \in \mathbb{R}^{K \times n}$ and $\widetilde{\bmQ} \in \mathbb{R}^{K \times n} = [\widetilde{\bmQ}^{(1)}, \ldots, \widetilde{\bmQ}^{(n)}]$, respectively, and the coordinate vector of model $p_i$ is denoted by $\widetilde{\bmq}_i \in \mathbb{R}^n$.
When $n$ is sufficiently large, the mean vector used for centering $\widetilde{\bmQ}$ can be well approximated by that of $\bmQ$, so each column of $\widetilde{\bmQ}$ may be regarded as a random sample (with replacement) from the columns of $\bmQ$.

Unlike in Section~\ref{sec:model-map-resampled-texts}, where resampling with replacement was handled by recording the number of duplicates and incorporating them as weights, 
we here represent the resampled data explicitly by allowing duplicate entries in the coordinate vectors $\widetilde{\bmq}_i$. 
Thus, the difference is only notational, and both approaches describe the same underlying resampling process.

As in the previous subsection, we focus on the squared Euclidean distance between models $p_i$ and $p_j$. To ensure comparability with the case of dataset size $n = N$, we scale the estimate as follows:
\begin{align*}
\widetilde g_{ij} &= \frac{N}{n} \|\widetilde{\bmq}_i - \widetilde{\bmq}_j \|^2 \\
  &= \frac{N}{n} \sum_{s=1}^n \left( \widetilde q_i(\widetilde x_s) - \widetilde q_j(\widetilde x_s) \right)^2.
\end{align*}
The terms $\{ (\widetilde q_i(\widetilde x_s) - \widetilde q_j(\widetilde x_s))^2 \}_{s=1}^n$ can be viewed as a bootstrap sample drawn (with replacement) from the set $\{ (q_i(x_s) - q_j(x_s))^2 \}_{s=1}^n$.
We define the resampling error as
\[
\widetilde\varepsilon_{ij} = \widetilde g_{ij} - g_{ij},
\]
which satisfies $\mathbb{E}[\widetilde\varepsilon_{ij} \mid D_n] = 0$ by construction.
The conditional mean squared error (MSE) of $\widetilde g_{ij}$ given $D_n$ is defined as
\[
\tau^2_{ij, n} = \mathbb{E}\left[\widetilde\varepsilon_{ij}^2 \mid D_n \right].
\]
In practice, we estimate this quantity by performing $R$ independent bootstrap trials and computing
\[
\tau^2_{ij, n} = \frac{1}{R} \sum_{r=1}^R \left( \widetilde\varepsilon_{ij}^{(r)} \right)^2,
\]
where $\widetilde\varepsilon_{ij}^{(r)}$ is the resampling error from the $r$-th trial.

This quantity $\tau^2_{ij,n}$ serves as the bootstrap estimate of $\kappa^2_{ij,n}$~\citep{efron-tibshirani-bootstrap}:
\[
\hat \kappa^2_{ij,n} = \tau^2_{ij,n}.
\]
We may also replace $D_n$ with $D_N$; in that case, this procedure corresponds to an $n$-out-of-$N$ bootstrap~\citep{bickel2008choice}, and $\tau^2_{ij,n}$ remains a valid estimator of $\kappa^2_{ij,n}$~\citep{shimodaira2014higher}.
Moreover, the resampling MSE obeys a standard scaling law $\tau^2_{ij,n} \propto n^{-1}$, and can be approximated by the theoretical relation
\[
\tau^2_{ij,n} = \frac{N}{n} \tau^2_{ij,N}.
\]

Following Section~\ref{sec:resampling-error}, we define the aggregated resampling MSE as
\[
\tau^2_{\text{unif}, n} = \frac{1}{K^2} \sum_{i,j=1}^K \frac{\tau^2_{ij,n}}{\max(g_{ij}, \varepsilon_0)^2}.
\]
This corresponds to the uniform resampling case discussed in Section~\ref{sec:resampling-error}.
The above discussion on $\tau^2_{ij,n}$ also applies directly to $\tau^2_{\text{unif}, n}$, and we obtain the bootstrap estimate of the sampling MSE $\kappa^2_n$ as
\[
\hat \kappa^2_n = \tau^2_{\text{unif}, n}.
\]
Theoretical scaling gives the approximation
\[
\tau^2_{\text{unif}, n} = \frac{N}{n} \tau^2_{\text{unif}, N}.
\]

\paragraph{Decomposition of Population Error.}
We consider a two-stage sampling procedure: first, we obtain a dataset $D_N$ by randomly sampling $N$ texts from the population $D^\dagger$; then, we perform weighted resampling with replacement from $D_N$ to obtain a smaller dataset $\widetilde D_n$ of size $n$. As described in Section~\ref{sec:N-to-d}, we consider three types of resampling weights: LS sampling, KL sampling, and uniform sampling.

The estimated squared distance based on the resampled data is defined as
\[
\widetilde g_{ij} = \| \widetilde{\bmq}_i - \widetilde{\bmq}_j \|_{\bm{w}_d}^2,
\]
where the notation already includes the scaling factor $N/n$ through the weights. In particular, for uniform sampling, we have $\pi_s = 1/N$, which leads to the weight $1/(n \pi_s) = N/n$.

We analyze the error of the resampling estimator $\widetilde g_{ij}$ relative to the true value $g_{ij}^\dagger$ in the population. Define the error as
\[
\varepsilon_{ij}^\dagger = \widetilde g_{ij} - g_{ij}^\dagger.
\]
This error can be decomposed as
\begin{align*}
  \varepsilon_{ij}^\dagger &=
  (\widetilde g_{ij} - g_{ij}) + (g_{ij} - g_{ij}^\dagger) \\
  &= \widetilde\varepsilon_{ij} + \varepsilon_{ij},
\end{align*}
where $\widetilde\varepsilon_{ij}$ and $\varepsilon_{ij}$ denote the resampling error and the sampling error, respectively.

Taking the expectation of the squared error, we obtain the decomposition of the MSE:
\begin{align*}
\sigma_{ij}^2 &= \mathbb{E}[ (\varepsilon_{ij}^\dagger)^2 ] \\
&= \mathbb{E}[ (\widetilde\varepsilon_{ij} + \varepsilon_{ij})^2 ] \\
&= \mathbb{E}[\mathbb{E}[ \widetilde\varepsilon_{ij}^2 \mid D_N]]\\
&\qquad
+ 2 \mathbb{E}[\mathbb{E}[ \widetilde\varepsilon_{ij} \mid D_N] \varepsilon_{ij}]
+ \mathbb{E}[\varepsilon_{ij}^2 ] \\
&= \mathbb{E}[\tau_{ij,n}^2] + \kappa_{ij,N}^2,
\end{align*}
where we used $\mathbb{E}[\widetilde\varepsilon_{ij} \mid D_N] = 0$.

Here, $\tau_{ij,n}^2$ is computed from resampling errors $\widetilde\varepsilon_{ij}^{(r)}$ obtained via weighted resampling from $D_N$. The term $\kappa_{ij,N}^2$ can be estimated by the bootstrap MSE under uniform resampling of size $N$, denoted by $\tau_{ij,\text{unif}, N}^2$. Thus, we estimate $\sigma_{ij}^2$ by
\[
\hat\sigma_{ij}^2 = \tau_{ij,n}^2 + \tau_{ij,\text{unif}, N}^2.
\]

As in Section~\ref{sec:resampling-error}, we define the aggregated population MSE by
\[
\sigma_n^2 = \frac{1}{K^2} \sum_{i,j=1}^K \frac{\sigma_{ij,n}^2}{\max(g_{ij}^\dagger, \varepsilon_0)^2}.
\]
Substituting the estimate $\hat\sigma_{ij}^2$ into this expression yields the estimate of $\sigma_n^2$:
\[
\hat\sigma_n^2 = \tau_n^2 + \tau_{\text{unif}, N}^2.
\]

\section{Model Performance Prediction}\label{sec:predict-performance}
\begin{figure*}[!t]
  \centering
  \includegraphics[width=\linewidth]{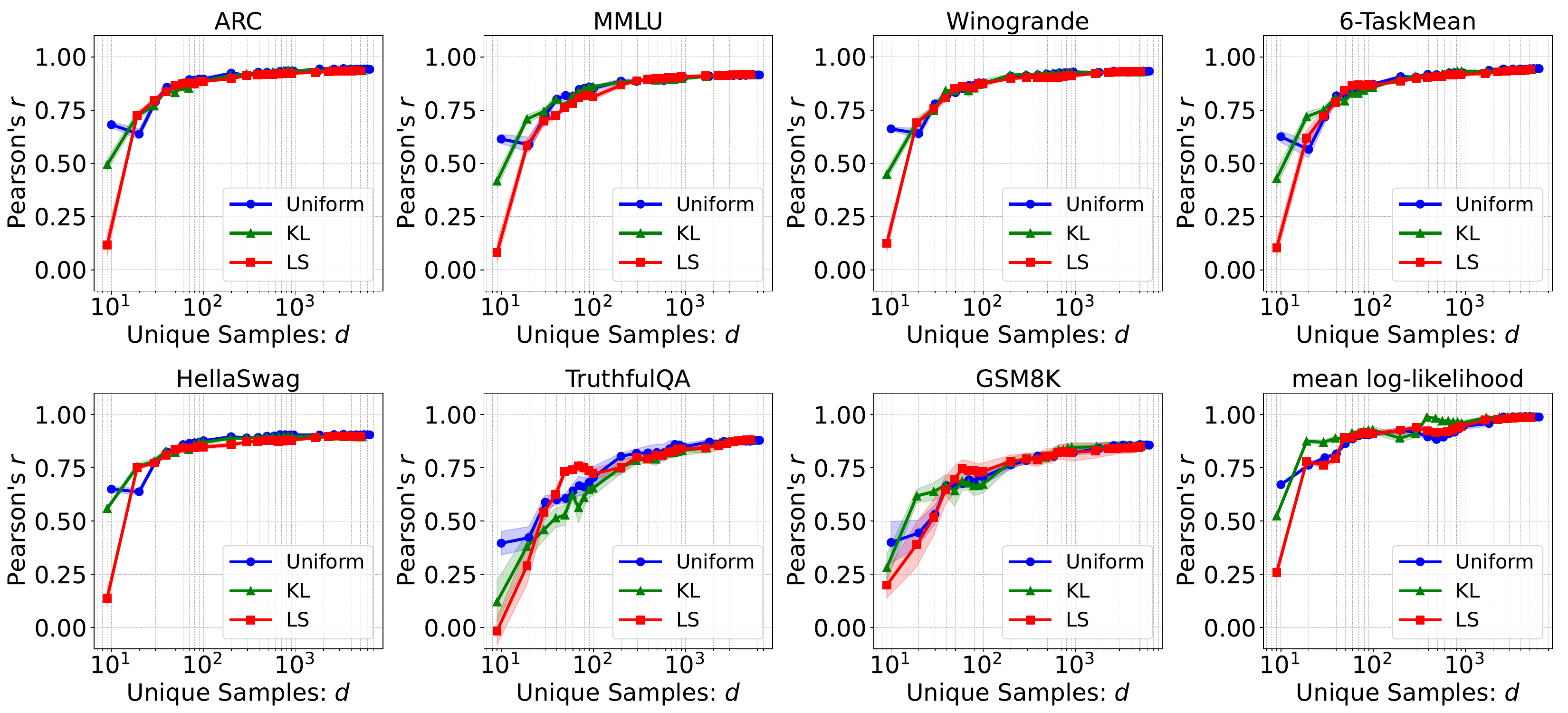}
  \caption{Pearson's correlation coefficient ($r$) between the predicted scores and the benchmark scores as a function of the number of unique texts $d$ (determined by the resampling size $n$), plotted separately for each resampling method.
Solid lines indicate the mean across five different data splits, and the shaded bands show $\pm$1 standard deviation. 
For every task and every method, predictive performance improves as $d$ increases.}
  \label{fig:pred_results_pearson}
\end{figure*}

\begin{table*}[t]
\tiny
\centering
\begin{tabular}{@{\hspace{1.2em}}r@{\hspace{1.2em}}r@{\hspace{1.2em}}l@{\hspace{1.2em}}r@{\hspace{1.2em}}r@{\hspace{1.2em}}r@{\hspace{1.2em}}r@{\hspace{1.2em}}r@{\hspace{1.2em}}r@{\hspace{1.2em}}r@{\hspace{1.2em}}r@{\hspace{1.2em}}}
\toprule
$n$ & $d$ & Method & ARC & HellaSwag & MMLU & TruthfulQA & Winogrande & GSM8K & 6-TaskMean & mean log-likelihood \\
\midrule
\multirow{3}{*}{$10^1$} & 10 & Uniform & 0.682 $\pm$ 0.014 & 0.650 $\pm$ 0.014 & 0.614 $\pm$ 0.022 & 0.396 $\pm$ 0.056 & 0.662 $\pm$ 0.008 & 0.400 $\pm$ 0.099 & 0.625 $\pm$ 0.026 & 0.671 $\pm$ 0.007 \\
 & 9 & KL & 0.494 $\pm$ 0.029 & 0.559 $\pm$ 0.010 & 0.417 $\pm$ 0.034 & 0.120 $\pm$ 0.101 & 0.448 $\pm$ 0.025 & 0.281 $\pm$ 0.063 & 0.429 $\pm$ 0.041 & 0.524 $\pm$ 0.012 \\
 & 9 & LS & 0.116 $\pm$ 0.051 & 0.138 $\pm$ 0.036 & 0.082 $\pm$ 0.045 & -0.016 $\pm$ 0.065 & 0.124 $\pm$ 0.040 & 0.199 $\pm$ 0.061 & 0.104 $\pm$ 0.043 & 0.258 $\pm$ 0.030 \\
\midrule
\multirow{3}{*}{$10^2$} & 100 & Uniform & 0.896 $\pm$ 0.012 & 0.877 $\pm$ 0.007 & 0.852 $\pm$ 0.012 & 0.705 $\pm$ 0.052 & 0.878 $\pm$ 0.007 & 0.706 $\pm$ 0.034 & 0.872 $\pm$ 0.006 & 0.909 $\pm$ 0.010 \\
 & 99 & KL & 0.888 $\pm$ 0.012 & 0.870 $\pm$ 0.015 & 0.861 $\pm$ 0.006 & 0.655 $\pm$ 0.036 & 0.875 $\pm$ 0.021 & 0.674 $\pm$ 0.043 & 0.857 $\pm$ 0.008 & 0.928 $\pm$ 0.018 \\
 & 99 & LS & 0.885 $\pm$ 0.005 & 0.847 $\pm$ 0.008 & 0.812 $\pm$ 0.016 & 0.722 $\pm$ 0.023 & 0.874 $\pm$ 0.009 & 0.733 $\pm$ 0.039 & 0.868 $\pm$ 0.006 & 0.910 $\pm$ 0.012 \\
\midrule
\multirow{3}{*}{$10^3$} & 949 & Uniform & 0.933 $\pm$ 0.004 & 0.905 $\pm$ 0.005 & 0.903 $\pm$ 0.007 & 0.849 $\pm$ 0.015 & 0.929 $\pm$ 0.004 & 0.821 $\pm$ 0.012 & 0.925 $\pm$ 0.005 & 0.946 $\pm$ 0.014 \\
 & 912 & KL & 0.937 $\pm$ 0.003 & 0.898 $\pm$ 0.006 & 0.899 $\pm$ 0.010 & 0.832 $\pm$ 0.025 & 0.926 $\pm$ 0.002 & 0.847 $\pm$ 0.022 & 0.933 $\pm$ 0.007 & 0.960 $\pm$ 0.012 \\
 & 898 & LS & 0.923 $\pm$ 0.003 & 0.880 $\pm$ 0.004 & 0.906 $\pm$ 0.007 & 0.840 $\pm$ 0.012 & 0.911 $\pm$ 0.007 & 0.822 $\pm$ 0.041 & 0.919 $\pm$ 0.009 & 0.943 $\pm$ 0.015 \\
\midrule
\multirow{3}{*}{$10^4$} & 6335 & Uniform & 0.942 $\pm$ 0.002 & 0.905 $\pm$ 0.004 & 0.916 $\pm$ 0.006 & 0.879 $\pm$ 0.018 & 0.933 $\pm$ 0.005 & 0.857 $\pm$ 0.018 & 0.945 $\pm$ 0.004 & 0.989 $\pm$ 0.006 \\
 & 5240 & KL & 0.937 $\pm$ 0.002 & 0.896 $\pm$ 0.005 & 0.917 $\pm$ 0.005 & 0.885 $\pm$ 0.009 & 0.930 $\pm$ 0.003 & 0.857 $\pm$ 0.026 & 0.941 $\pm$ 0.005 & 0.988 $\pm$ 0.006 \\
 & 5080 & LS & 0.935 $\pm$ 0.002 & 0.897 $\pm$ 0.006 & 0.918 $\pm$ 0.007 & 0.882 $\pm$ 0.014 & 0.931 $\pm$ 0.005 & 0.851 $\pm$ 0.019 & 0.941 $\pm$ 0.004 & 0.986 $\pm$ 0.007 \\
\bottomrule
\end{tabular}

    \caption{Summary of the representative values from Fig.~\ref{fig:pred_results_pearson}.
For each resampling method, and for $n = 10^{1}, 10^{2}, 10^{3}, 10^{4}$ (with the corresponding numbers of unique texts $d$), the table reports Pearson's correlation $r$ between the predicted and true benchmark scores, together with $\pm$1 standard deviation.}
    \label{tab:pred_results_pearson}
\end{table*}

We computed the log-likelihoods of the unique texts contained in the resampled data and, following~\citet{modelmap2025}, used these values to predict model performance.  
This section describes the experiments in detail.

\subsection{Model Performance}
Following~\citet{modelmap2025}, we used the Open LLM Leaderboard v1~\cite{open-llm-leaderboard-v1} as the source of model performance scores\footnote{\url{https://huggingface.co/spaces/open-llm-leaderboard-old/open_llm_leaderboard}}.
The leaderboard provides scores for the following six benchmark tasks\footnote{Benchmark scores are available for 996 of the 1,018 models released by~\citet{modelmap2025}. For convenience, we denote this subset size by $K$ throughout this section.}: AI2 Reasoning Challenge (ARC)~\cite{arxiv:1803.05457}, HellaSwag~\cite{arxiv:1905.07830}, MMLU~\cite{arxiv:2009.03300}, TruthfulQA~\cite{arxiv:2109.07958}, Winogrande~\cite{arxiv:1907.10641}, and GSM8K~\cite{arxiv:2110.14168}.

In addition to these benchmark scores, we followed~\citet{modelmap2025} and also predicted (i) the average across the six tasks (hereafter referred to as 6-TaskMean) and (ii) the mean log-likelihood $\bar\ell_i$ of the log-likelihood vector $\boldsymbol\ell_i\in\mathbb{R}^N$.

\subsection{Dataset Configuration}
The number of resampled texts was set to
\begin{align*}
n \in \{&10,20,\ldots,90,100,200,\ldots,900,\\
        &1000,2000,\ldots,9000,10000\}.
\end{align*}
As explained in Section~\ref{sec:N-to-d}, resampling $n$ texts from $D_N$ yields $\widetilde{D}_n$, and from this resampled set we extract a set of $d$ unique texts, $D^{\ast}_d=\{x_{u_1},\dots,x_{u_d}\}$.
Using these $d$ texts, we computed the log-likelihood matrix $\boldsymbol{L}_d\in\mathbb{R}^{K\times d}$ and then formed the doubly-centered matrix $\widetilde{\boldsymbol{Q}}_d=[\widetilde{\boldsymbol{q}}_{1},\dots,\widetilde{\boldsymbol{q}}_{K}]^{\top}\in\mathbb{R}^{K\times d}$ with scaling weights $\boldsymbol{w}_d=\left(\frac{c(u_1)}{n\pi_{u_1}},\dots,\frac{c(u_d)}{n\pi_{u_d}}\right)^{\top}\!\in\mathbb{R}^d$.

For each benchmark task, the dataset is given as $\{(\widetilde{\boldsymbol{q}}_{1},f_1),\dots,(\widetilde{\boldsymbol{q}}_{K},f_K)\}$,  
where $\widetilde{\boldsymbol{q}}_{i}\!=\!(\widetilde{q}_{i}(x_{u_1}),\dots,\widetilde{q}_{i}(x_{u_d}))^{\top}\in\mathbb{R}^d$ is the $i$-th row of $\widetilde{\boldsymbol{Q}}_d$ corresponding to language model $p_i$, and $f_i\in[0,100]$ is its benchmark score.

\subsection{Regression Formulation}
As in~\citet{modelmap2025}, we adopted ridge regression to predict each benchmark score.
The matrix of explanatory variables is
\begin{align}
\widetilde{\boldsymbol{Q}}_d \boldsymbol{W}_d^{1/2}\in\mathbb{R}^{K\times d},\label{eq:ridge_explanatory}
\end{align}
where the diagonal matrix $\boldsymbol{W}_d^{1/2}\in\mathbb{R}^{d\times d}$ has $\sqrt{\frac{c(u_t)}{n\pi_{u_t}}}$ on its $t$-th diagonal entry\footnote{
We adopt $\widetilde{\boldsymbol{Q}}_d\boldsymbol{W}_d^{1/2}$ as the matrix of explanatory variables rather than $\widetilde{\boldsymbol{Q}}_d$ itself.
Let $\boldsymbol{W}_d=\mathrm{diag}(\boldsymbol{w}_d)$, whose $t$-th diagonal entry is $\frac{c(u_t)}{n\pi_{u_t}}$.  
Since $\boldsymbol{W}_d^{1/2}\boldsymbol{W}_d^{1/2}=\boldsymbol{W}_d$, we have $\widetilde{\boldsymbol{Q}}_d\boldsymbol{W}_d^{1/2}(\widetilde{\boldsymbol{Q}}_d\boldsymbol{W}_d^{1/2})^{\top}=\widetilde{\boldsymbol{Q}}_d\boldsymbol{W}_d\widetilde{\boldsymbol{Q}}_d^{\top}$.
Then Lemma 1 of~\citet{drineas-kannan-2001} gives
$\mathbb{E}\bigl[\widetilde{\boldsymbol{Q}}_d\boldsymbol{W}_d\widetilde{\boldsymbol{Q}}_d^{\top}\bigr]=\boldsymbol{Q}\boldsymbol{Q}^{\top}$.
Thus, pre-multiplying by $\boldsymbol{W}_d^{1/2}$ preserves this desirable expectation while appropriately re-scaling the features.
}.
Let $\boldsymbol{f}=(f_1,\dots,f_K)^{\top}\in\mathbb{R}^{K}$ denote the vector of target variables.
The objective function, parameterized by $\boldsymbol{\theta}\in\mathbb{R}^{d}$, is defined as
\[
\mathcal{L}(\boldsymbol{\theta})
= \|\boldsymbol{f}-\widetilde{\boldsymbol{Q}}_d\boldsymbol{W}_d^{1/2}\boldsymbol{\theta}\|^2
  + \alpha\|\boldsymbol{\theta}\|^2,
\]
where $\alpha\in\mathbb{R}_{>0}$ is a hyperparameter controlling the strength of regularization.

\subsection{Training Setup}
We partitioned the set of models into five folds according to their model types, as defined in~\citet{modelmap2025}.
We then trained the parameters and predicted the benchmark scores.
Training was performed with~\texttt{RidgeCV} from scikit-learn~\cite{DBLP:journals/sigmobile/VaroquauxBLGPM15}.
To account for randomness, we repeated the data split with five different random seeds.
As the evaluation metric, we computed Pearson's correlation coefficient ($r$) between the predicted and true benchmark scores for each split and averaged the results.

For each training set (i.e., the four folds in the outer five-fold CV), we conducted an inner five-fold cross-validation to select $\alpha$ from $\{10^{1},\dots,10^{9}\}$, again following~\citet{modelmap2025}.
The predicted scores were then clipped to the range $[0,100]$.  
When the target variable $\boldsymbol{f}$ was the mean log-likelihood $(\bar\ell_1,\dots,\bar\ell_K)\in\mathbb{R}^K$, we searched $\alpha$ in $\{10^{-4},\dots,10^{4}\}$ and did not clip the predictions.

\subsection{Results}
Figure~\ref{fig:pred_results_pearson} shows Pearson's correlation coefficient between the predicted scores and the benchmark scores for Uniform, KL, and LS sampling as a function of the number of unique texts $d$ for each resampling size $n$.
Table~\ref{tab:pred_results_pearson} summarizes representative values obtained for each method at $n=10^{1},10^{2},10^{3},10^{4}$ (and the corresponding $d$).

For all tasks and all methods, Pearson's correlation coefficient $r$ increases as $d$ grows.
As shown in Table~\ref{tab:pred_results_pearson}, even at $d\approx 100$ ---for example, for 6-TaskMean--- the predicted scores already achieve $r \approx 0.85$ under every resampling method, and only minor differences are observed among the strategies.
Hence, predictive performance depends almost solely on the number of unique texts $d$.

This behavior can be interpreted as follows: when $d\ll K$ ($K\!\approx\!10^{3}$), the column vectors of $\widetilde{\boldsymbol{Q}}_d\boldsymbol{W}_d^{1/2}\in\mathbb{R}^{K\times d}$ span a subspace of insufficient dimensionality, limiting the expressive power of the regression model.
As $d$ increases, the feature space expands and ridge regression becomes effective, leading to a rapid improvement in performance; however, once $d\gtrsim10^{3}$ provides sufficient dimensionality, further gains in correlation are gradual.

\end{document}